\newtheorem{theorem}{Theorem}
\newtheorem{remark}{Remark}[theorem]
\newtheorem{proposition}{Proposition}
\icmltitlerunning{Geometry of Score Based Generative Models}
\begin{document}

\twocolumn[
\icmltitle{Geometry of Score Based Generative Models}



\icmlsetsymbol{equal}{*}

\begin{icmlauthorlist}
\icmlauthor{Sandesh Ghimire}{neu}
\icmlauthor{Jinyang Liu}{neu}
\icmlauthor{Armand Comas}{neu}
\icmlauthor{Davin Hill}{neu}
\icmlauthor{Aria Masoomi}{neu}\\
\icmlauthor{Octavia Camps}{equal,neu}
\icmlauthor{Jennifer Dy}{equal,neu}

\end{icmlauthorlist}

\icmlaffiliation{neu}{Department of Electrical and Computer Engineering, Northeastern University, MA, USA}

\icmlcorrespondingauthor{Sandesh Ghimire}{drsandeshghimire@gmail.com}

\icmlkeywords{Machine Learning, ICML}

\vskip 0.3in
]



\printAffiliationsAndNotice{\icmlEqualContribution} 

\begin{abstract}
In this work, we look at Score-based generative models (also called diffusion generative models) from a geometric perspective. From a new view point, we prove that both the forward and backward process of adding noise and generating from noise are Wasserstein gradient flow in the space of probability measures. We are the first to prove this connection. Our understanding of Score-based (and Diffusion) generative models have matured and become more complete by drawing ideas from different fields like Bayesian inference, control theory, stochastic differential equation and Schrodinger bridge. However, many open questions and challenges remain. One problem, for example, is how to decrease the sampling time? We demonstrate that looking from geometric perspective enables us to answer many of these questions and provide new interpretations to some known results. Furthermore, geometric perspective enables us to devise an intuitive geometric solution to the problem of faster sampling. By augmenting traditional score-based generative models with a projection step, we show that we can generate high quality images with significantly fewer sampling-steps.
\end{abstract}

\section{Introduction}
\begin{figure}[t]
\begin{center}
\centerline{\includegraphics[width=\columnwidth]{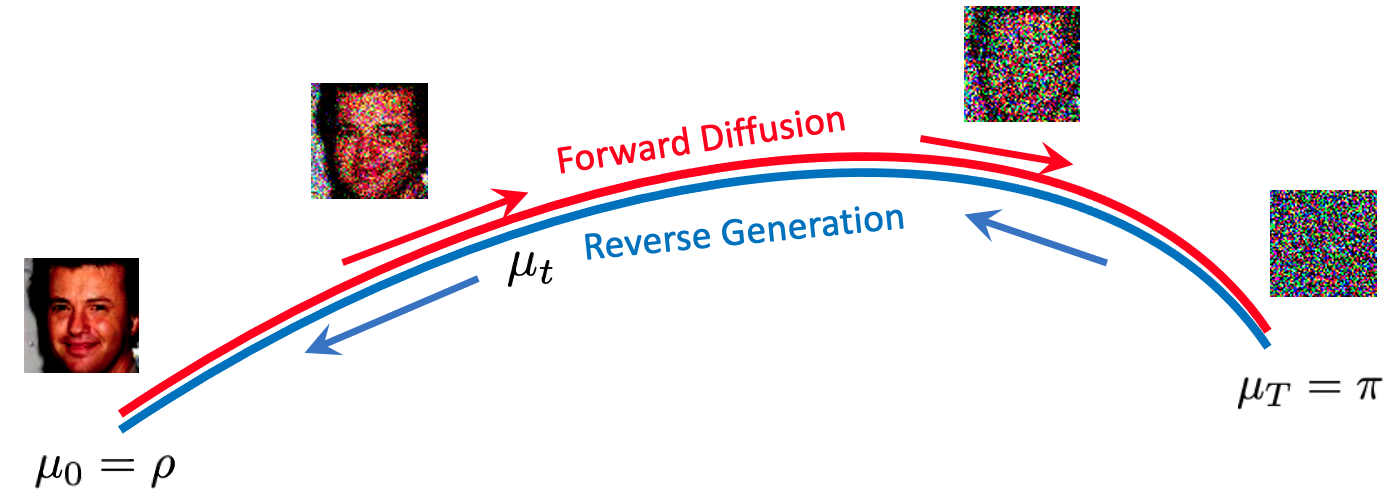}}
\caption{Both the forward diffusion process and reverse generation process in diffusion generative models correspond to Wasserstein gradient flow along the same gradient-flow-path.}
\label{wasserstein_grad}
\end{center}
\vspace{-0.8cm}
\end{figure}
Score-based (or Diffusion) models are a new type of generative models in the field of computer vision and machine learning, achieving state-of-the-art results in image synthesis \cite{diffusion_beats_gan} and log likelihood \cite{diffusion_variational}. They have recently gained popularity due to interesting applications such as text to image generation (DALL-E \cite{dalle2}, \cite{latent_diffusion} and Imagen \cite{imagen2022photorealistic}), image super-resolution, image editing \cite{sdedit_meng}, etc. Score-based generative models have enjoyed diverse perspectives from different fields. Originally, diffusion models (DDPM) were developed from expected lower bound (ELBO) maximization on data log likelihood  \cite{Ho_ddpm}. \citet{song_datagradient} showed that we can learn gradient of log likelihood (called score functions) and use it to generate images. \citet{song2021scorebased} showed that the epsilon function in DDPM is in fact the scaled version of the score function. They further generalized these models to a continuous setting as stochastic differential equations \cite{song2021scorebased}. More recent works have connected score-based generative models with Schrodinger bridge problem \cite{Bortoli2021diffusion} and control theoretic perspectives \cite{likelihood_chen2022, likelihood_huang2021}. 

In this work, we present a completely different view-point on score-based generative models: geometric perspective.
To the best of our knowledge, we are the first to explore the geometric connection of these generative models. Applying the solid mathematical framework in the area of Wasserstein gradient flow \cite{jko, ambrosio2005gradient, wibisono2018sampling, salim2020wasserstein, korba2020non}, we show that the forward and backward process of adding noise and generating image from the noise are in fact equivalent to moving on a gradient-flow-path in a metric space of probability distributions following the Wasserstein gradient flow equation. 

While our understanding of score-based generative models has matured over time, few important questions remain unanswered. For example, why is it a good idea to choose forward and reverse variance the same? Can we choose reverse variance differently? Are score-based generative models same as the energy based models \cite{gen_convnet, flow_contrastive_ebm, comet}?  Furthermore, new models have been proposed like Wavefit \cite{wavefit}, which tries to generalize the diffusion sampling to a proximal gradient type of update. How can we explain this type of algorithms? In this work, we demonstrate that the geometric connection investigated in this work helps to answer these questions from a geometric point of view. 

In addition to conceptual advantages and novel perspectives, geometric framework enables us to design practical algorithms with faster sampling capability. Score-based generative models work remarkably well when the number of sampling-steps is large (i.e. small step-size). However, the sampling time is also large for such finer schemes. As we decrease the number of sampling steps, the samples move away from the gradient-flow-path incurring error in each step, and resulting into high overall error. To minimize such error and achieve high quality samples even with small number of sampling steps, we propose to project back intermediate samples to the gradient-flow-path after every step. To achieve this, we propose an efficient estimation of Wasserstein gradient to descend towards the flow-path. As demonstrated in the result section, our proposed method significantly reduces error for smaller number of sampling steps.
Below we summarize our contributions. All complete proofs are included in the appendix.

\begin{enumerate}
    \item To the best of our knowledge, this is the first work to theoretically prove the connection between score-based generative models and Wasserstein gradient flow. We establish this relationship through Theorems \ref{forward_thm} and \ref{reverse_flow}.
    
    \item This connection sheds light on several interesting questions: 1) the reverse variance in score-based generative models, 2) the connection between score-based model and energy based model, and 3) the use of proximal gradient algorithms as proposed in recent works.
    
    \item Based on these insights, we propose a new algorithm which generalizes the score-based model and allows for significantly faster sampling, which would otherwise be very difficult to achieve. To achieve this, we also propose an efficient Wasserstein gradient estimation algorithm.
    
\end{enumerate}

\section{Related Works}
Early works on diffusion models were based on matching the forward and reverse joint distributions through bounds on log likelihood \cite{Ho_ddpm, sohl2015deep}. \cite{song_datagradient} proposed a score-based generative model motivating from the Langevin dynamics and estimating the score function. Later \cite{song2021scorebased} showed that the two approaches are actually equivalent and it can be generalized further in continuous time setting through the stochastic differential equations. On the more theoretical directions, score-based optimization has been shown to be equivalent to likelihood maximization through Feynman-Kac theorem \cite{likelihood_chen2022, likelihood_huang2021}. Other notable works are interpreting the forward diffusion and generation as solving the Schrodinger Bridge problem \cite{Bortoli2021diffusion}. Many approaches have been proposed to speed up the sampling process through clever ways to solve differential equations \cite{fast_solver}. 

In their seminal work, Jordan, Kinderlehrer, Otto (JKO) proved the connection between Wasserstein gradient flow and the the diffusion systems guided by Fokker-Planck equations \cite{jko}. This result has been vastly generalized and formalized by \cite{villani2003topics, villani2009optimal} and \cite{ambrosio2005gradient} giving birth to the theory of Wasserstein gradient flow and optimization on space of probability measures. Several notable works have followed in machine learning \cite{wibisono2018sampling}, \cite{korba2020non}, \cite{salim2020wasserstein}, for example for sampling, generative models, etc.

\section{Preliminaries}
\subsection{Notations}
Let $\mathcal{B}(\mathcal{X})$ denote the Borel $\sigma$- algebra over $\mathcal{X}$, and let $\mu$ denote a probability measure on $\mathcal{X}$. $\mathcal{P}_2(\mathcal{X})$ denotes the space of probability measures $\mu$ on $\mathcal{X}$ with finite second order moment.
For any $\mu \in \mathcal{P}_2(\mathcal{X})$, $L^2(\mu)$ is the space of functions $f: \mathcal{X} \to \mathcal{X}$ such that $\int ||f||^2 d\mu < \infty$ \cite{ambrosio2005gradient, korba2020non}. Let $T: \mathcal{X} \to \mathcal{X} $, then $T_{\#} \mu$ denotes the pushforward measure of $\mu$ by $T$ such that the transfer lemma $\int \phi(T(x))d\mu(x) = \int \phi(y)dT_{}\mu(y)$ holds for any measurable bounded function $\phi$. We use Wasserstein-2 distance as a metric on the space of probability measures. The Wasserstein-2 distance is defined as $W_2^2 (\mu, \nu) = \inf_{s \in S(\mu, \nu)} \int ||x-y||^2 ds(x,y)$, where $\mu, \nu \in P_2(\mathcal{X})$ and $\mathcal{S}(\mu, \nu)$ is the set of couplings between $\mu$ and $\nu$, i.e. the set of nonnegative measures, $s$ over $\mathcal{X} \times \mathcal{X}$ such that their projections on first and second components are $P_{\#}s = \mu$ and $Q_{\#}s = \nu$ where $P: (x,y) \mapsto x$ and $Q: (x,y) \mapsto y$ \cite{villani2003topics}.

\subsection{Wasserstein Gradient Flow}
Let $(\mu_t)_{t \in (0,T)}$ denote a family of probability measures.
This family satisfies a continuity equation if there exists a family of velocity fields, $(v_t)_{t \in (0,T)}$ such that 
\begin{align}
\label{continuity_eq}
    \frac{\partial \mu_t}{\partial t} + div(\mu_t v_t) = 0
\end{align}
 in a distributional sense. It is also absolutely continuous if $||v_t||_{L^2(\mu_t)}$ is integrable over $(0,T)$. Among all possible $v_t$, there is one with minimum $L^2(\mu_t)$ norm, and it lies on the tangent space of $\mathcal{P}_2(\mathcal{X})$ and is called tangent vector field \cite{ambrosio2005gradient} Chapter 8.

We define a functional on the space of probability measures, $\mathcal{F}(\mu) : \mathcal{P}_2(\mathcal{X}) \to (-\infty, \infty)$. We define Wasserstein gradient of any functionals on the $\mathcal{P}_2(\mathcal{X})$ space as the change in the value of functional with small perturbation on the probability measure. Wasserstein gradient can be expressed in the following form \cite{ambrosio2005gradient} Chapter 10:
\begin{align}
    \nabla_{W_2} \mathcal{F} = \nabla \mathcal{F'}(\mu)
\end{align}
Consider the KL divergence, $\text{KL}(\mu || \pi)$ between any measure $\mu$ and a base measure $\pi$. We can show that the Wasserstein gradient of the functional $\text{KL}(.|| \pi)$ at $\mu$ is 
\begin{align}
    \nabla_{W_2} \text{KL}(.|| \pi) = \nabla \log (\frac{ \mu}{\pi})
\end{align}
In the family $(\mu_t)_{t \in (0,T)}$, let the initial measure be $\mu_0 = \rho$ and final measure is $\mu_T = \pi$. Then there exists a geodesic between two probability measures $\rho$ and $\pi$ with respect to the Wasserstein metric. If we choose the velocity field equal to the negative of Wasserstein gradient (i.e. $v_t = - \nabla_{W_2} \text{KL}(.|| \pi) $), then we can show that the path traced by the probability measures is the geodesic between $\rho$ and $\pi$ \cite{ambrosio2005gradient} Chapter 7, and the flow is known as Wasserstein gradient flow. Using the functional $\text{KL}(.|| \pi)$ and the continuity equation, we obtain the equation of Wasserstein gradient flow as:
\begin{align}
    \frac{\partial \mu_t}{\partial t} = div \big[\mu_t \nabla \log (\frac{ \mu_t}{\pi})]
\end{align}
Wasserstein gradient flow is a differential equation of probability measures. Consider a Wasserstein gradient flow with initial measure $\mu_0$ satisfying the continuity equation \ref{continuity_eq}. Let $x_0 \sim \mu_0$ be sample from the initial measure. The differential equation for the samples can be derived from continuity equation as follows \cite{ambrosio2005gradient}:
\begin{align}
\label{mu_to_x}
    \dot{x} = v_t
\end{align}
\subsection{Score Based Generative Model}
Score-based generative model \cite{song2021scorebased} extends diffusion models to work on continuous time setting using stochastic differential equations (SDEs). The forward and reverse process of adding noise and generating images are interpreted as forward and reverse diffusion process with following differential equations: 
\begin{align}
\label{for_sde}
    \text{FOR}:dx &= f(x,t)dt +g_t dw\\
    \nonumber
    & t: 0\to T, x_0 \sim \rho\\
\label{rev_sde}
    \text{REV}:dx &= [f(x,t) - g_t^2 \nabla_x \log \mu_t(x)] dt + g_t d\bar{w}\\
    \nonumber
    & t: T\to 0, x_T \sim \pi
\end{align}
where $f$ is forward drift function and $dw$ is the Brownian motion. Note that the flow of time in two SDE is different: the time flows from $0$ to $T$ in the forward process and the initial distribution is $\rho$, while the time flows from $T$ to $0$ in the reverse process. Time direction is crucial in the stochastic differential equation because for the forward process, $x_t$ is independent of the future $t' >t$ while in the reverse direction it is independent of the past \cite{anderson1982reverse}. To make things simpler such that time always flow in positive direction, we can equivalently use the positive time notation indexed by $\tau$ (following is equivalent to eq.(\ref{rev_sde})):
\begin{align}
\label{rev_sde_tau}
    dx &= [- f(x,\tau) + g_{\tau}^2 \nabla_x \log \mu_{\tau}(x)] d\tau + g_{\tau} d\bar{w}\\
    \nonumber
    & \tau: 0\to T, x_0 \sim \pi, \tau = T - t
\end{align}
Note that we use $t$ for forward flow of time and $\tau = T-t$ for the backward flow of time, so that $\tau$ now flows from $0$ to $T$. With this notation, $\mu_{t=0} = \mu_{\tau = T} = \rho$, $\mu_{t=T} = \mu_{\tau = 0} = \pi$, and $\beta_t = \beta_{T-\tau}$. Euler-Maruyama discretization of the reverse SDE equation yields:
\begin{align}
\label{discrete_reverse_sde}
    x_{\tau+\delta \tau} = x_{\tau}  +(g_{\tau}^2 \nabla_x \log \mu_{\tau}(x) - f(x,\tau)) \delta \tau + g_{\tau} z 
\end{align}
where $z \sim N(0, I)$ is a random normal distributed sample.
\section{Forward Diffusion as Gradient Flow}
Instead of taking the velocity vector to be negative of Wasserstein gradient, we consider an accelerated flow where at any time, $t$, the velocity is equal to the negative Wasserstein gradient scaled by a time-varying $\beta_t$.
\begin{proposition}[Accelerated Wasserstein Gradient Flow]
We define accelerated gradient flow with respect to the functional $\mathcal{F}$ as the gradient flow where the velocity vector is defined as $v_t = -\beta_t \nabla_{W_2} \mathcal{F}$. Consequently, the continuity equation is given by:
\begin{align}
\label{accelerated_gf}
    \frac{\partial \mu_t}{\partial t} = div \big[\mu_t \beta_t \nabla_{W_2} \mathcal{F} ]
\end{align}
\end{proposition}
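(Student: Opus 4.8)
The plan is to read off \eqref{accelerated_gf} directly from the continuity equation \eqref{continuity_eq}, since by construction the curve $(\mu_t)_{t\in(0,T)}$ is the one transported by the prescribed velocity field $v_t=-\beta_t\nabla_{W_2}\mathcal{F}$. Substituting this $v_t$ into \eqref{continuity_eq} gives
\[
\frac{\partial \mu_t}{\partial t}=-\mathrm{div}(\mu_t v_t)=-\mathrm{div}\!\big(\mu_t(-\beta_t\nabla_{W_2}\mathcal{F})\big)=\mathrm{div}\big[\mu_t\,\beta_t\,\nabla_{W_2}\mathcal{F}\big],
\]
which is exactly the claimed identity. So the real content of the proposition is the observation that multiplying the Wasserstein gradient by the \emph{scalar} field $\beta_t$ is an admissible modification of the flow, together with the bookkeeping needed to make the substitution legitimate.

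The steps I would carry out, in order, are: (i) verify that $v_t=-\beta_t\nabla_{W_2}\mathcal{F}$ still lies in the tangent space $T_{\mu_t}\mathcal{P}_2(\mathcal{X})$ and is the minimal-norm velocity generating the curve --- this is immediate because the tangent space is a linear subspace of $L^2(\mu_t)$, hence closed under multiplication by scalars, and $\|v_t\|_{L^2(\mu_t)}=|\beta_t|\,\|\nabla_{W_2}\mathcal{F}\|_{L^2(\mu_t)}$; (ii) check absolute continuity of $(\mu_t)$, i.e. $\int_0^T\|v_t\|_{L^2(\mu_t)}\,dt<\infty$, which holds under the same regularity assumed for the unaccelerated flow as long as $\beta_t$ is bounded (or merely integrable) on $(0,T)$; (iii) insert the expression $\nabla_{W_2}\mathcal{F}=\nabla\mathcal{F}'(\mu)$ and, in the special case $\mathcal{F}=\mathrm{KL}(\cdot\,\|\,\pi)$, specialize to $\partial_t\mu_t=\mathrm{div}[\mu_t\beta_t\nabla\log(\mu_t/\pi)]$, which is the accelerated Fokker--Planck equation that will connect to the forward SDE in Theorem~\ref{forward_thm}.

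The one genuine subtlety --- and the part I would spend the most care on --- is not the algebra but the well-posedness of the object being defined: one must know that there exists a curve $(\mu_t)$ realizing this prescribed velocity field, i.e. that the sample ODE $\dot x=v_t=-\beta_t\nabla\log(\mu_t(x)/\pi(x))$ has well-defined flow maps $X_t$ whose pushforwards $\mu_t=(X_t)_\#\mu_0$ solve \eqref{accelerated_gf}. I would dispatch this by a time reparametrization: setting $s(t)=\int_0^t\beta_r\,dr$ and $\tilde\mu_s=\mu_{t(s)}$, the accelerated flow for $(\mu_t)$ is equivalent to the ordinary Wasserstein gradient flow $\partial_s\tilde\mu_s=\mathrm{div}[\tilde\mu_s\nabla_{W_2}\mathcal{F}]$ for $(\tilde\mu_s)$, so existence/uniqueness and all structural properties transfer verbatim from the standard theory in \cite{ambrosio2005gradient}. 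Presenting the reparametrization also makes transparent why "accelerated" is the appropriate name, since $\beta_t$ is simply a local change of clock speed along the same geodesic path.
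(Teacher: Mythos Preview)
Your proposal is correct, and in fact goes well beyond what the paper does. The paper offers no proof of this proposition at all: it is stated as a \emph{definition} (``We define accelerated gradient flow\ldots''), with the continuity equation~\eqref{accelerated_gf} understood to follow by direct substitution of $v_t=-\beta_t\nabla_{W_2}\mathcal{F}$ into~\eqref{continuity_eq}. Your first displayed computation is precisely that substitution, so at the level of the paper you are already done.

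The remainder of your proposal --- checking that $v_t$ lies in the tangent space, verifying absolute continuity under integrability of $\beta_t$, and establishing well-posedness by the time reparametrization $s(t)=\int_0^t\beta_r\,dr$ that reduces the accelerated flow to the standard one --- is genuine added value not present in the paper. The reparametrization argument in particular is exactly the content of the paper's later Remark that one ``can also think of accelerated Wasserstein gradient flow as regular Wasserstein gradient flow with non-uniform discretization,'' but you make it precise where the paper only gestures at it. So: same approach for the stated claim, with your version supplying the analytic scaffolding the paper omits.
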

Using this accelerated Wasserstein Gradient flow, we can establish a connection with the forward process in score-based generative model. We start from the Fokker-Planck equation corresponding to the stochastic differential equation of the forward diffusion process given by eq.(\ref{for_sde}):
\begin{align}
\label{fokker_planck}
    \frac{\partial \mu_t}{\partial t} = -div (\mu_t f) + \frac{1}{2}div(\nabla(g_t^2 \mu_t))
\end{align}
where the initial measure is $\mu_0 = \rho$. Following this SDE, we know that it will end up in the final measure, $\mu_T = \pi$.
Next theorem shows that the forward Fokker Planck equation and the accelerated Wasserstein Gradient descent are equivalent.
\begin{theorem}
\label{forward_thm}
Consider an accelerated gradient flow in eq.(\ref{accelerated_gf}) with initial measure $\mu_0 = \rho$ and the target measure $\mu_T = \pi$ and the functional on the Wasserstein space defined by $\mathcal{F}(\mu) = \text{KL}(.||\pi)$. The family of measures corresponding to this gradient flow is equivalent to the family of measures corresponding to the forward Fokker Plank equation in eq.(\ref{fokker_planck}) given that $f$ and $\beta_t$ take the following form:  $f = {\beta_t} \nabla \log \pi$, $\beta_t = \frac{g_t^2}{2}$.
\end{theorem}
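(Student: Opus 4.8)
The plan is to verify that, once the prescribed forms $f = \beta_t\nabla\log\pi$ and $\beta_t = g_t^2/2$ are substituted, the accelerated gradient-flow PDE in eq.~(\ref{accelerated_gf}) and the forward Fokker--Planck PDE in eq.~(\ref{fokker_planck}) become literally the same evolution equation for $\mu_t$. Since both are posed with the identical initial datum $\mu_0 = \rho$, uniqueness of solutions then forces the two families $(\mu_t)_{t\in(0,T)}$ to coincide, and in particular both terminate at $\mu_T = \pi$.

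First I would expand the left-hand side. By the preliminary computation of the Wasserstein gradient of $\mathcal{F}(\mu)=\text{KL}(.||\pi)$, eq.~(\ref{accelerated_gf}) reads $\partial_t\mu_t = div[\beta_t\mu_t\nabla\log(\mu_t/\pi)]$. Splitting $\nabla\log(\mu_t/\pi) = \nabla\log\mu_t - \nabla\log\pi$ and using $\mu_t\nabla\log\mu_t = \nabla\mu_t$ turns this into
\[
\frac{\partial \mu_t}{\partial t} = div[\beta_t\nabla\mu_t] - div[\beta_t\mu_t\nabla\log\pi].
\]
Then I would expand the right-hand side: since $g_t$ depends on $t$ only, $\nabla(g_t^2\mu_t) = g_t^2\nabla\mu_t$, so eq.~(\ref{fokker_planck}) becomes $\partial_t\mu_t = -div(\mu_t f) + div(\tfrac{g_t^2}{2}\nabla\mu_t)$. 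Comparing the two expressions term by term, the second-order (diffusion) terms coincide precisely when $\beta_t = g_t^2/2$, and, with that identification in force, the first-order (drift) terms coincide precisely when $f = \beta_t\nabla\log\pi$ --- exactly the hypotheses of the theorem. Hence under these hypotheses the two PDEs are identical, and uniqueness for the resulting linear Fokker--Planck equation with datum $\rho$ finishes the argument. An equivalent and arguably cleaner route is to observe that multiplying the velocity field by $\beta_t$ is just the time reparametrization $s(t) = \int_0^t\beta_r\,dr$, under which the accelerated flow becomes the ordinary Wasserstein gradient flow of $\text{KL}(.||\pi)$; the JKO theorem identifies the latter with the Langevin/Fokker--Planck diffusion whose stationary measure is $\pi$, and undoing the reparametrization recovers eq.~(\ref{fokker_planck}) with the stated coefficients.

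I expect the genuine difficulty to lie not in this term-by-term algebra, which is essentially a one-line calculation, but in the analytic bookkeeping around it: one must ensure the curves stay absolutely continuous so that $\nabla\log\mu_t$ and the identity $\mu_t\nabla\log\mu_t = \nabla\mu_t$ are meaningful, that the Wasserstein-gradient formula for $\text{KL}(.||\pi)$ and the continuity equation of the Proposition apply along the whole flow, and that an appropriate well-posedness/uniqueness theorem for the Fokker--Planck equation is available so that equality of the two PDEs can be upgraded to equality of the two families of measures.
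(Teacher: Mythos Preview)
Your proposal is correct and follows essentially the same approach as the paper: both arguments reduce to a term-by-term comparison of the accelerated gradient-flow continuity equation with the forward Fokker--Planck equation, using the identity $\mu_t\nabla\log\mu_t=\nabla\mu_t$ (the paper applies it in the other direction, rewriting the diffusion term of eq.~(\ref{fokker_planck}) as $div(\mu_t\tfrac{1}{2}g_t^2\nabla\log\mu_t)$ rather than rewriting the gradient-flow side). The paper's proof is purely the formal algebraic match and does not address the regularity or uniqueness issues you flag at the end; your additional JKO/time-reparametrization remark and the analytic caveats go beyond what the paper provides.
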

\begin{remark}
Consider the special case with measure $\mu_T = \pi =\mathcal{N}(0,I) = {\exp(-\frac{||x||^2}{2})}/{Z}$, we get $f = -\beta x$ and the forward diffusion equation is given by the following SDE:
\begin{align}
    dx = -{\beta_t}x dt +\sqrt{2\beta_t}dw
\end{align}
which is exactly the forward flow of DDPM model \cite{Ho_ddpm, song2021scorebased}.
\end{remark}

This implies that the forward diffusion process considered in the diffusion generative model, DDPM \cite{Ho_ddpm, song2021scorebased} can be equivalently thought as an accelerated Wasserstein gradient flow starting from an initial measure $\mu_0 = \rho$ corresponding to the data distribution and following the negative gradient towards the target measure $\mu_T = \mathcal{N}(0, I)$.
\begin{remark}
We can also think of accelerated Wasserstein gradient flow as regular Wasserstein gradient flow with non-uniform discretization, \textit{i.e.,} step at $t$ is scaled by $\beta_t$. 
\end{remark}

Next we investigate the geometric interpretation of the generation process or the reverse SDE.

\section{Generation as Reverse Gradient Flow}
 Next theorem establishes the equivalence between the reverse SDE and the Wasserstein Gradient flow.  
 
\begin{theorem}
\label{reverse_flow}
The reverse SDE in eq.(\ref{rev_sde_tau}) is equivalent to the Wasserstein gradient flow in the space of probability measures with respect to the functional $\mathcal{F}(\mu ) = - \text{KL}(.||\pi)$ starting from the initial measure $\mu_{\tau = 0} = \pi$ towards the target measure $\mu_{\tau = T} = \rho$.
\end{theorem}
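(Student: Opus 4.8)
\emph{Proof proposal.} The plan is to show that the Fokker--Planck equation associated with the reverse SDE~(\ref{rev_sde_tau}) coincides, after substituting the drift and noise schedule fixed by Theorem~\ref{forward_thm}, with the accelerated Wasserstein gradient flow equation~(\ref{accelerated_gf}) for the functional $\mathcal{F}(\mu) = -\text{KL}(\cdot\,||\,\pi)$, and then to match the two endpoints. First I would write the Fokker--Planck equation for~(\ref{rev_sde_tau}): since the diffusion coefficient $g_\tau$ depends only on $\tau$, the marginals $\mu_\tau$ of the reverse process satisfy $\partial_\tau \mu_\tau = -div\big(\mu_\tau(-f + g_\tau^2\nabla\log\mu_\tau)\big) + \tfrac{1}{2}div(\nabla(g_\tau^2\mu_\tau))$. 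I then substitute $f = \beta_\tau\nabla\log\pi$ and $\beta_\tau = g_\tau^2/2$, which are exactly the identifications established in Theorem~\ref{forward_thm}, now read at backward time $\tau = T-t$.

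The key algebraic step is to merge the Laplacian term with the score term using $\nabla\mu_\tau = \mu_\tau\nabla\log\mu_\tau$: we have $\tfrac{1}{2}div(\nabla(g_\tau^2\mu_\tau)) = div(\beta_\tau\mu_\tau\nabla\log\mu_\tau)$, which cancels half of the contribution $-div(\mu_\tau g_\tau^2\nabla\log\mu_\tau) = -div(2\beta_\tau\mu_\tau\nabla\log\mu_\tau)$; collecting the remainder with $div(\mu_\tau f) = div(\beta_\tau\mu_\tau\nabla\log\pi)$ gives
\begin{align}
\label{rev_fp_simplified}
\frac{\partial \mu_\tau}{\partial \tau} = -div\big(\beta_\tau\,\mu_\tau\,\nabla\log(\tfrac{\mu_\tau}{\pi})\big).
\end{align}
On the other hand, the Wasserstein gradient of $\mathcal{F} = -\text{KL}(\cdot\,||\,\pi)$ is $\nabla_{W_2}\mathcal{F} = -\nabla\log(\mu/\pi)$, so the accelerated gradient flow~(\ref{accelerated_gf}) for this $\mathcal{F}$ reads $\partial_\tau\mu_\tau = div\big[\mu_\tau\beta_\tau(-\nabla\log(\mu_\tau/\pi))\big]$, which is identical to~(\ref{rev_fp_simplified}). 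Hence the two families of measures solve the same continuity equation, with velocity field $v_\tau = -\beta_\tau\nabla_{W_2}\mathcal{F} = \beta_\tau\nabla\log(\mu_\tau/\pi)$; that is, the reverse process is a (generally accelerated, in the sense of Proposition~1) Wasserstein gradient flow of $\mathcal{F}$, equivalently it ascends $\text{KL}(\cdot\,||\,\pi)$ along the very path that the forward flow of Theorem~\ref{forward_thm} descends (Figure~\ref{wasserstein_grad}).

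It remains to match the endpoints. Because~(\ref{rev_sde_tau}) is the Anderson time reversal of the forward SDE~(\ref{for_sde}) \cite{anderson1982reverse}, its marginals are exactly the time-reversed forward marginals, $\mu_\tau = \mu^{\mathrm{fwd}}_{T-\tau}$; evaluating at $\tau = 0$ and $\tau = T$ and using $\mu^{\mathrm{fwd}}_0 = \rho$, $\mu^{\mathrm{fwd}}_T = \pi$ yields $\mu_{\tau=0} = \pi$ and $\mu_{\tau=T} = \rho$, as claimed. I expect the genuine subtlety to lie in this endpoint/consistency part rather than in the PDE computation: since $-\text{KL}(\cdot\,||\,\pi)$ is concave with maximizer $\pi$, equation~(\ref{rev_fp_simplified}) admits $\mu\equiv\pi$ as a stationary solution and is ill posed as a forward-in-$\tau$ initial value problem (it is the anti-diffusive, backward Fokker--Planck equation), so the relevant solution must be singled out as the time reversal of the well-posed forward flow rather than propagated freely from $\mu_{\tau=0}=\pi$. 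A fully careful statement would also record the regularity needed for the Fokker--Planck equation to hold and for $\mu_\tau$ to admit a density, so that $\nabla\log\mu_\tau$ and $\nabla_{W_2}\mathcal{F}$ are well defined; under the assumptions already implicit in the forward analysis these are inherited along the reversed path.
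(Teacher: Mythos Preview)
Your argument is correct and takes a genuinely different route from the paper. You work entirely at the PDE level: write the Fokker--Planck equation for the reverse SDE, substitute the identifications $f=\beta_\tau\nabla\log\pi$, $\beta_\tau=g_\tau^2/2$, and collapse the diffusion term against half of the score term to land on the continuity equation for the accelerated gradient flow of $-\text{KL}(\cdot\,||\,\pi)$. This is essentially the mirror of the paper's proof of Theorem~\ref{forward_thm}, and it is cleaner as a proof of equivalence of marginals.

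The paper instead proceeds at the sample level via a \emph{forward--backward splitting}: it adds and subtracts the negative entropy $\int\log\mu\,d\mu$ to decompose $-\text{KL}(\mu||\pi)=\mathcal{G}(\mu)+\mathcal{H}(\mu)$ with $\mathcal{G}(\mu)=\int(-2\log\mu+\log\pi)\,d\mu$ and $\mathcal{H}(\mu)=\int\log\mu\,d\mu$, takes an explicit gradient step on $\mathcal{G}$ and a JKO (implicit) step on $\mathcal{H}$, uses that the JKO step for negative entropy is exactly Brownian motion, and recovers the discretized reverse SDE before sending $\delta\tau\to 0$. What this buys the paper is not a shorter proof but a structural explanation it exploits heavily in Section~6: the added/subtracted entropy is precisely why the reverse variance equals the forward variance (Section~6.2), why dropping the splitting yields the probability-flow ODE of Section~6.1, and why proximal/JKO-type updates are natural (Section~6.4). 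Your approach gets the theorem more directly but does not surface this decomposition; conversely, your remarks on ill-posedness of the backward equation and on selecting the solution as the time reversal of the forward flow are a point the paper's proof leaves implicit.
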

\begin{proof}
    \begin{align}
    \mathcal{F}(\mu) &= -KL(\mu||\pi)\\
    & = \int -\log \mu d\mu + \int \log \pi d\mu \\
    \label{add_subtract}
    & = \color{red}{\underbrace{\color{black}{\int (-2 \log \mu  + \log \pi)     
    d\mu}}_{{\mathcal{G}}}} \color{black}{+} \color{blue} {\underbrace{\color{black}{ \int \log \mu d\mu}}_{\mathcal{H}}}
\end{align}
Here, we apply forward backward splitting scheme due to \cite{wibisono2018sampling, salim2020wasserstein}
\begin{align}
\label{descend}
 \nu_{\tau} &= (I - \beta_{\tau} \nabla_{{W}_2}\mathcal{G}(\mu_{\tau}))_{\#} \mu_{\tau}\\
 \label{jko_step}
\mu_{\tau+ \delta \tau} &= JKO_{\beta_{\tau} \mathcal{H}}(\nu_{\tau})
\end{align}
where $\nabla_{{W}_2}\mathcal{G}(\mu)$ is the Wasserstein gradient, the expression for which can be obtained as:
\begin{align}
    \nabla_{W_2} \mathcal{G} = \nabla \mathcal{G'}(\mu) = \nabla(-2 \log \mu  + \log \pi)
\end{align}
 In eq.(\ref{descend}), we are trying to move in the direction of Wasserstein gradient. Let samples $x_{\tau}$ from distribution $x_{\tau} \sim \mu_{\tau}$. Transforming differential equation in measure space to sample space, similar to eq.(\ref{mu_to_x}), yields:
 \begin{align}
    &y_{\tau} = x_{\tau} -\beta_{\tau} \nabla(-2 \log \mu_t(x_{\tau}) + \log \pi(x_{\tau}) ) \delta \tau
\end{align}
In eq.(\ref{jko_step}), we are using JKO operator as a solution of the negative entropy functional, $\mathcal{H}$, where the JKO operator is defined as : $$JKO_{\beta, \mathcal{H}}(\nu) =  \underset{\zeta \in \mathcal{P}_2(\mathcal{X})}{\text{argmin}} \mathcal{H}(\zeta) + \frac{1}{2 \beta} W_2^2(\zeta, \nu)$$ 

For the negative entropy functional, we have the exact solution as Brownian motion \cite{jko, wibisono2018sampling, salim2020wasserstein}. Let $y_{\tau} \sim \nu_{\tau}$, we obtain
\begin{align}
    &y_{\tau} = x_{\tau} -\beta_{\tau} \nabla(-2 \log \mu_t(x_{\tau}) + \log \pi(x_{\tau})) \delta \tau\\
    &x_{\tau + \delta \tau}  = y_{\tau} + \sqrt{2\beta_{\tau}} z_{\tau}
\end{align}
Combining both, we obtain
\begin{align}
\nonumber
   x_{\tau+\delta \tau} &= x_{\tau} +(2\beta_{\tau} \nabla \log \mu_{\tau}(x_{\tau}) - \beta_{\tau}\nabla \log \pi(x_{\tau}))\delta \tau \\
   &+ \sqrt{2\beta_{\tau}} z_{\tau}
\end{align}
In the limiting case as $\delta \tau \to 0$, we obtain,
\begin{align}
\nonumber
   dx =  (2\beta_{\tau} \nabla \log \mu_{\tau}(x_{\tau}) - \beta_{\tau}\nabla \log \pi(x_{\tau}))d \tau 
   + \sqrt{2\beta_{\tau}} dw
\end{align}
which coincides exactly with the reverse SDE in eq.(\ref{rev_sde_tau}) for $g_{\tau}^2 = 2\beta_{\tau}$ and $f = \beta_{\tau}\nabla \log \pi$.
\end{proof}
 The reverse SDE or the score-based model is trying to reverse the forward process by tracing the path followed in the forward process in the opposite direction. One important implication of this theorem is that since we are moving towards the target measure $\mu_T = \mathcal{N}(0, I)$ in the forward process, the reverse is actually simply moving away from $\mathcal{N}(0, I)$, which is realized as the accelerated Wasserstein gradient flow with the functional $- \text{KL}(.||\pi)$. The gradient flow path with constant velocity is the geodesic. Since we are considering gradient flow path with acceleration, it is not exactly the geodesic, but similar path traced by gradient flow. We will call it gradient-flow-path in rest of the paper.
\section{Insights, Connections, Discussion}
We have shown that both the forward and reverse diffusion process involved in score-based generative models are gradient flows on the space of probability measures. 
We gain geometric insights because of this geometric interpretation.

\subsection{Alternative Interpretation of Reverse SDE equation}
Score-based generative model uses the fact that for every forward SDE of the form in eq.(\ref{for_sde}), there exists a reverse SDE as in eq.(\ref{rev_sde_tau}), which is a remarkable result due to \cite{anderson1982reverse}. Theorem \ref{reverse_flow} provides an interesting interpretation of this result from a completely different perspective. In eq.(\ref{add_subtract}), we added and subtracted the negative entropy term $\int \log \mu d\mu$ in diffusion and drift terms respectively. It allowed us to design a forward-backward algorithm instead of forward algorithm of Wasserstein gradient flow. The backward term essentially added the Brownian motion term yielding us a reverse stochastic differential equation. Note that if we had not added and subtracted the term $\int \log \mu d\mu$, we would have obtained following iterative scheme:
\begin{align}
    x_{\tau+\delta \tau} &= x_{\tau} +(\beta_{\tau} \nabla \log \mu_{\tau}(x_{\tau}) - \beta_{\tau}\nabla \log \pi(x_{\tau}))\delta \tau
\end{align}
Note that this is a discretized version of the following ODE. 
\begin{align}
\label{rev_ode}
   \frac{dx}{d\tau} = -f(x, \tau) + \frac{1}{2}g_{\tau}^2 \nabla \log \mu_{\tau}(x), \hspace{0.2cm} \tau: [0 \to T]
\end{align}
Comparing this equation with eq.(\ref{rev_sde_tau}), observe that eq.(\ref{rev_sde_tau}) has an additional $\frac{1}{2}g_t^2 \nabla \log \mu_t(x)$ in the drift part which is compensated by the Brownian motion $g_{\tau} d\Tilde{w}$. It is clear to see that eq.(\ref{rev_sde_tau}) and eq.(\ref{rev_ode}) yields same family of marginal distributions at $\tau \in [0,T]$ even though the former is deterministic differential equation and the latter is the stochastic. Perhaps the advantage of score-based models is that stochasticity helps in generating diverse samples for small sample size.

\subsection{Why is reverse variance same as the forward variance?}
In the DDPM model \cite{Ho_ddpm}, it was not clear how to choose the variance of the reverse differential equation, and why choosing the reverse variance the same as in forward is a good strategy. From previous analysis, we see that the reverse variance must be same as the forward because we have added and subtracted the same negative entropy term from both the drift and the diffusion. However, it is possible to change the reverse time variance. For example, we can add $\alpha  \int \log \mu d\mu $ to both the drift and diffusion terms in eq. (\ref{add_subtract}). Then the reverse SDE variance will be $\sqrt{2 \alpha \beta_t }$, but then the drift term in eq.(\ref{rev_sde_tau}) will also be modified to $\frac{1+\alpha}{2}g_t^2 \nabla \log \mu_t(x)$ instead of $g_t^2 \nabla \log \mu_t(x)$.

\subsection{Contrasting Score-based with Energy-based Model}
Let's assume that the probability measure of data can be obtained in the form of $\rho \propto \exp(-V)$. Consider Wasserstein gradient descent with the functional as $KL(.||\rho)$
\begin{align}
    \mathcal{F}(\mu) &= KL(\mu||\rho)\\
    & =  \int V d\mu + \int \log \mu d\mu
\end{align}
We can use the same forward-backward splitting scheme as we used in Theorem 2 Proof, and with similar reasoning, we can recover the Langevin dynamics:
\begin{align}
    x_{\tau+\delta \tau} &= x_{\tau}  - \beta_{\tau}\nabla V(x_{\tau})\delta \tau 
   + \sqrt{2\beta_{\tau}} z
\end{align}
This demonstrates the critical difference between the Energy-based model and the score-based model: while the energy based model is moving towards the data distribution $\mu_0 = \rho$ with functional $KL(.||\rho)$, the score-based model is moving away from the isotropic Gaussian distribution ($\pi$) with the functional, $-KL(.||\pi)$ . Score-based generative model traces the forward diffusion path in the reverse direction thereby avoiding the need to work with the data distribution $\rho$. In the energy based model, however, we need to either estimate energy function like $V$ \cite{flow_contrastive_ebm, comet} or KL divergence with the data, $\rho$. 

\subsection{Proximal Algorithms in Diffusion models}
WaveFit \cite{wavefit} tries to generalize the iteration in diffusion models to a proximal algorithm. Motivating from a fixed point iteration, they try to improve upon DDPM model by drawing ideas from GANs and propose a proximal algorithm type of approach which is faster in generating samples than DDPM without losing quality. Here, we show that starting from geometric perspective we can reach the proximal algorithm as a way to perform Wasserstein gradient descent. Consider a functional, say $\mathcal{F}(\mu) = KL(\mu||\rho)$ for example where $\rho$ is the data distribution. Forward discretization of Wasserstein gradient descent yields us iteration \cite{jko, salim2020wasserstein}
\begin{align}
    \mu_{\tau'} 
    = \underset{\nu \in \mathcal{P}_2(\mathcal{X})}{\text{argmin}} \mathcal{F}(\nu) + \frac{1}{2 \gamma} W_2^2(\nu, \mu)
\end{align}

which is a proximal gradient algorithm in the space of probability measures. This justifies why proximal algorithms make sense in the context of diffusion generative models or score-based generative models because we are trying to reach the data distribution descending in the direction of Wasserstein gradient. \citet{jko, wibisono2018sampling, salim2020wasserstein} have shown that proximal algorithm converges to the target distribution, $\rho$. As for the choice of functional $\mathcal{F}$, it can be any convex functional that decreases as we descend towards the target measure $\rho$. Wavefit \cite{wavefit} shows that using much stronger GAN-type objective as a functional yields good result.



\section{Challenges with Faster Sampling}
\begin{figure}[t!]
\begin{center}
\centerline{\includegraphics[width=\columnwidth]{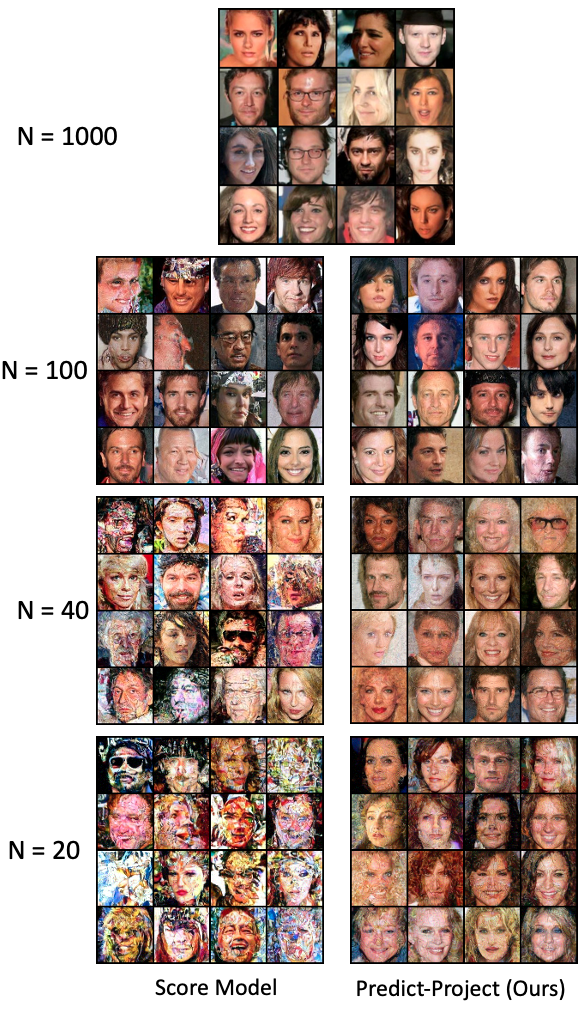}}
\caption{Qualitative comparison of Celeb-A samples generated by Score-based model and our model at different number of sampling steps (N). Our model maintains reasonable quality even when N decreases down to 20. }
\label{fig:celeba}
\end{center}
\vspace{-0.7cm}
\end{figure}
Once the connection between the gradient flow and score-based generative model is established, we can interpret the generation as a process walking on the gradient-flow-path. If we follow the flow-path with small steps, we can reliably reach the initial data distribution, as demonstrated by the success of the score-based generative models and diffusion models. However, this is not a great idea if we increase the step size. The score-based increment is a linear approximation and therefore it accrues more error as we increase the step size.  It has been experimentally observed that the samples get poorer as we increase the step size in score-based models \cite{diffusion_beats_gan, Ho_ddpm, song2021scorebased}. 
From a geometric point-of-view, we are taking Wasserstein gradient steps using forward-backward strategy. While this strategy works well when the step size is small, it converges to a biased measure for large step-size. Bias associated with the forward-backward strategy for large step size has been studied in the context of Wasserstein gradient flow \cite{wibisono2018sampling}. In our case, this issue is further exacerbated by the fact that the functional $-\text{KL}(\mu|| \pi)$ we are trying to minimize is actually concave with respect to $\mu$.

\begin{figure}[t]
\begin{center}
\centerline{\includegraphics[width=\columnwidth]{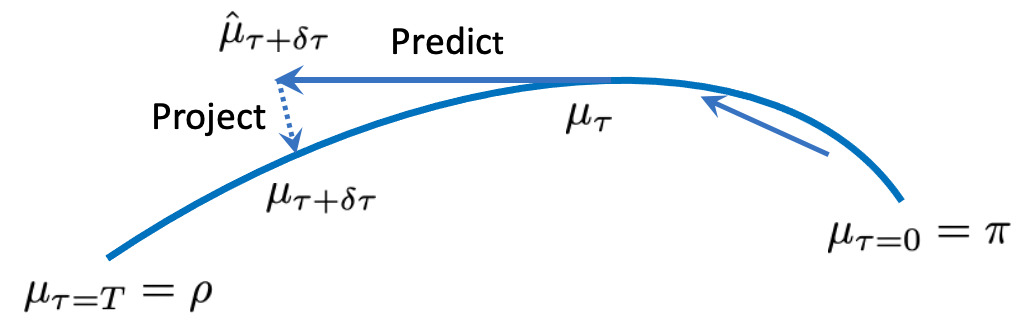}}
\caption{During generation, sample moves tangentially to the gradient-flow-path in the reverse direction. If the step-size is large, it incurs error, which we mitigate by projection to the gradient-flow-path.}
\label{fig:predict_project}
\end{center}
\vspace{-0.5cm}
\end{figure}

To mitigate this issue, we propose an intuitive and geometric idea: projection. As shown in Fig.\ref{fig:predict_project}, as we try to sample in score-based models with large step-size, the error gets large and the trajectory deviates away from the gradient-flow-path. We propose to resolve this problem by projecting again to the gradient-flow-path before taking another step.

\section{Projection to Gradient-flow-path}
\begin{algorithm}[tb]
   \caption{Predict-Project Sampling}
   \label{alg:sampling}
\begin{algorithmic}[1]
   \STATE Inputs: $N$, $\delta \mu$, $s_{\theta}^{*}$, $T_{\theta}^{*}$
   \STATE $x \sim \mathcal{N}(0,I)$, 
   $\delta \tau = 1/N$
   \FOR{$\tau=0$ {\bfseries to} $1 - 1/N$}
   \STATE $x^{pred} = x +(2\beta_{\tau} s_{\theta}^{*}(x) - \beta_{\tau}\nabla \log \pi(x_{\tau}))\delta \tau$
   \STATE $x^{proj} = x^{pred} + T_{\theta}^*(x^{pred}, \tau + \delta \tau) .\sqrt{\beta_\tau}.{\delta \mu}. \delta \tau $
   \STATE $z \sim \mathcal{N}(0,I)$
   \STATE $x = x^{proj} + \sqrt{2\beta_t}z$
   \ENDFOR
   \STATE RETURN $x$ 
\end{algorithmic}
\end{algorithm}
Score-based generative model first trains a score model, $s$ such that $s_{\theta}^*(x_\tau, \tau) = \nabla \log \mu_{\tau}(x_{\tau})$ using score matching strategy. Once, the score model is trained, the discretized Eurler-Maruyama step (eq.(\ref{discrete_reverse_sde})) is used for generation of samples, where score function, $s^*$ replaces $\nabla \log \mu_{\tau}(x_{\tau})$:
\begin{align}
\nonumber
    x_{\tau+\delta \tau} &= x_{\tau} +(2\beta_{\tau} s_{\theta}^*(x_\tau, \tau) - \beta_{\tau}\nabla \log \pi(x_{\tau}))\delta \tau \\
    \nonumber
   &+ \sqrt{2\beta_{\tau}} z\\
   \label{eq:pred}
   & = x_{\tau+\delta \tau}^{pred} + \sqrt{2\beta_{\tau}} z_{\tau}
\end{align}
 It can also be interpreted as predict and diffuse steps, where predict step is $x_{\tau+\delta \tau}^{pred} = x_{\tau} +(2\beta_{\tau} s_{\theta}^*(x_\tau, \tau) - \beta_{\tau}\nabla \log \pi(x_{\tau}))\delta \tau$. 
Since generation is trying to trace the gradient-flow-path, after each of these predict-diffuse, we should obtain samples from the measure $\mu_{\tau + \delta \tau}$ on the gradient-flow-path. Because of discretization error and bias, these samples do not lie on the gradient-flow-path, in fact they deviate away. To pull these samples towards the measure, $\mu_{\tau + \delta \tau}$ on the gradient-flow-path, we use the fact that we have a way to sample from the measures on gradient-flow-path.  Using the SDE equation, we can write closed form conditional distribution of $\mu_{\tau + \delta \tau}$ as follows:
\begin{align}
\nonumber
    \mu_{\tau+\delta \tau | T}(x_{\tau + \delta \tau} | x_{ T}) = \mathcal{N}(x_{\tau+\delta \tau};x_{\tau+\delta \tau}^{mean}(x_{\tau=T}), 2 \beta_{\tau+ \delta \tau}I)
\end{align}
Sampling from this conditional distribution is given by the following equation:
\begin{align}
\label{eq:sample_geodesic}
    x_{\tau+\delta \tau} = x_{\tau+\delta \tau}^{mean}(x_{\tau = T}) + \sqrt{2\beta_{\tau + \delta \tau}} z_{\tau}
\end{align}
Comparing eq.(\ref{eq:sample_geodesic}) with eq.(\ref{eq:pred}), we note that pulling $x_{\tau+\delta \tau}^{pred}$ close to $x_{\tau+\delta \tau}^{mean}$ may be enough to pull the samples $x_{\tau+\delta \tau} $ in eq.(\ref{eq:pred}) towards gradient-flow-path assuming that $\beta_{\tau + \delta \tau}$ is close to $\beta_{\tau }$. In terms of measures, we consider the measure associated with samples $x_{\tau+\delta \tau}^{pred}$ and target samples $x_{\tau+\delta \tau}^{mean}$. Let's define the measure corresponding to samples $x^{pred}_{\tau+\delta \tau}$ as $\mu^{pred}_{\tau+\delta \tau}$ and the measures corresponding to means,  $x_{\tau+\delta \tau}^{mean}$ as $\mu^{mean}_{\tau+\delta \tau}$. We can sample from the measure $\mu^{mean}_{\tau+\delta \tau}$ by first sampling from $x_{\tau=T} \sim \mu_{\tau=T}$, and passing them through $x_{\tau+\delta \tau}^{mean}$. Our strategy to project the samples in eq.(\ref{eq:pred}) onto gradient-flow-path is to project the pred measure, $\mu^{pred}_{\tau+\delta \tau}$ to the mean measure $\mu^{mean}_{\tau+\delta \tau}$. We achieve this through Wasserstein gradient descent in the space of probability measures. For that, we need an efficient way to estimate Wasserstein gradient, which we describe in next subsection.

\subsection{ Efficient Estimation of Wasserstein Gradient }

Imagine that we want to estimate Wasserstein Gradient of a functional $\mathcal{J}(\mu)$, \textit{i.e.}, $\nabla_{W_2}\mathcal{J}(\mu)$. For that, we can use the following Taylor expansion:
\begin{align}
\label{Wgradient_est}
\mathcal{J}((I +h T)_{\# \mu} ) = \mathcal{J}(\mu) + h \langle \nabla_{W_2}\mathcal{J}(\mu), T \rangle_{\mu} + o(h)
\end{align}
where, $\nabla_{W_2}\mathcal{J}(\mu) \in L^2(\mu)$ is the Wasserstein gradient of $\mathcal{J}$ at $\mu$. 
To estimate the Wasserstein gradient, consider the following optimization problem:
\begin{align}
   T^* =  \underset T {argmin} \hspace{0.2cm}{\langle \nabla_{W_2}\mathcal{J}(\mu), T \rangle_{\mu} + \frac{1}{2h}||T||^2 _{\mu}}
\end{align}
It is easy to see that $T^*  = -h \nabla_{W_2}\mathcal{J}(\mu)$ is the solution of this problem. Plug in eq.(\ref{Wgradient_est}), and parameterize the gradient function $T$ as a function of neural network parameters $\theta$. Hence, we solve the following optimization problem:
\begin{align}
\label{efficient_Wasserstein_grad}
   \underset {\theta} {min} \hspace{0.2cm}{\mathcal{J}((I + T_\theta)_{\# \mu} ) + \frac{1}{2h}||T_{\theta}||^2 _{\mu}}
\end{align}
 This optimization is efficient and can use parallel processing because: 1) it only requires samples from the measure $\mu$, and 2) we can use minibatch from the measure $\mu$ to update neural network parameters at a time. This removes the need to obtain all samples at a time leading to stochastic gradient descent optimization of $\theta$. 

\subsection{Predict-Project Algorithm}
\begin{table}[t]
\caption{Comparison of FID score as a measure of generated image quality between score-based and our generative model at different number of sampling steps N.}
\label{fid_is_score}
\begin{center}
\begin{small}
\resizebox{\columnwidth}{!}{%
\begin{tabular}{lcccccr}
\toprule
{Datasets}
&& N = 1000 & N = 100 & N = 40 & N = 20 \\ 
\hline
\multicolumn{6}{c}{FID Score $\downarrow$}\\
\hline
\multirow{ 2}{*}{Celeb-A}& Score Model   & \multirow{ 2}{*}{6.331}& 35.14 & 149.42 & 222.71 \\
&Predict-Project & & \textbf{20.54} & \textbf{68.23} & \textbf{121.12}
\\ \hline

\multirow{ 2}{*}{LSUN}& Score Model   & \multirow{ 2}{*}{15.12}& 34.62 & 122.23 & 246.17 \\
&Predict-Project & & \textbf{25.35} & \textbf{66.61} & \textbf{164.32}
\\ \hline
\multirow{ 2}{*}{SVHN}& Score Model   & \multirow{ 2}{*}{18.95}&  \textbf{146.63} & 183.30 & 285.61 \\
&Predict-Project & & 149.56 & \textbf{ 174.34} & \textbf{152.94 }\\
\bottomrule
\end{tabular}
}
\end{small}
\end{center}
\vspace{-0.4cm}
\end{table}

\begin{figure*}[t!]
\begin{center}
\centerline{\includegraphics[width=0.9\textwidth]{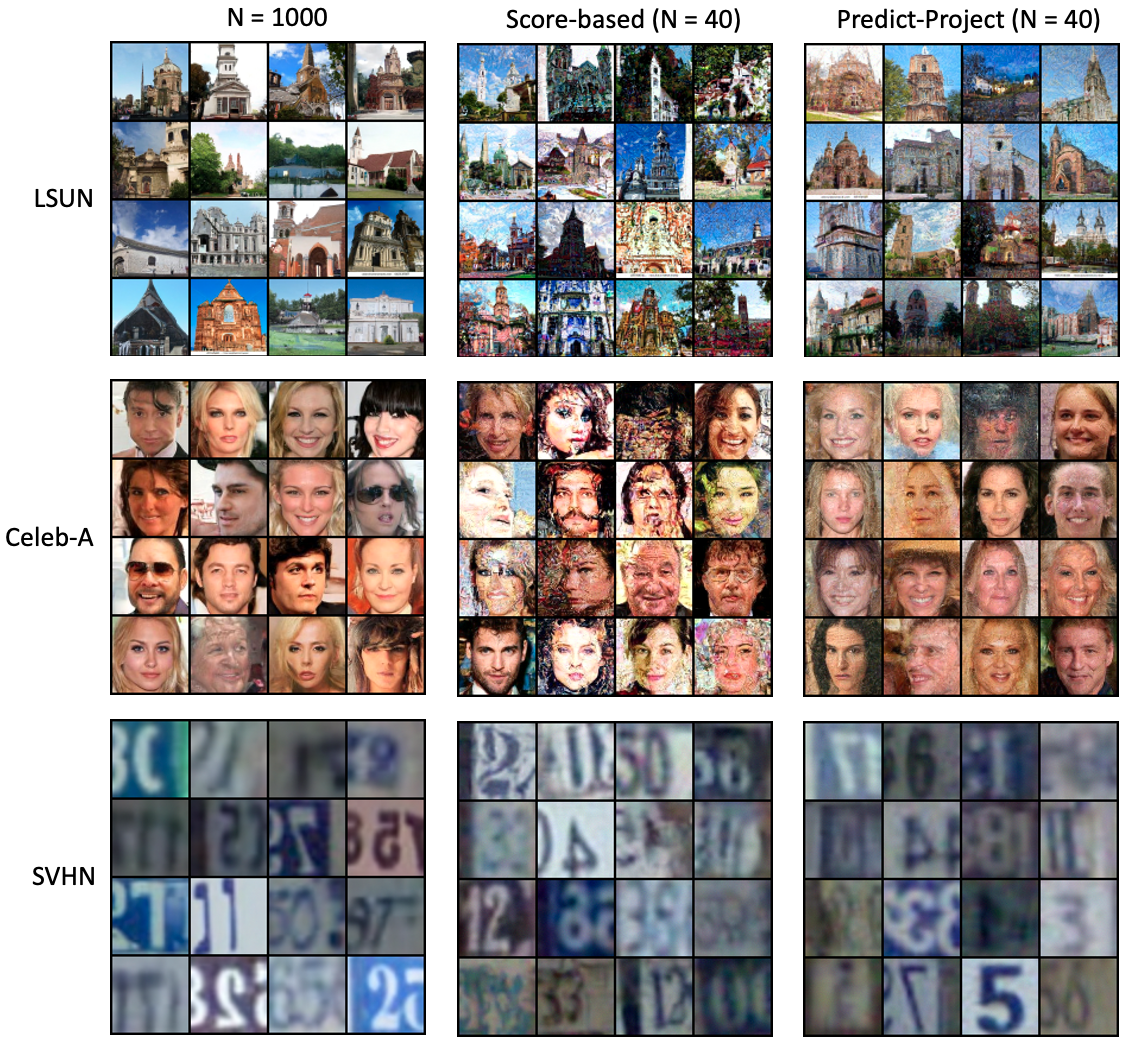}}
\caption{Comparison of generated samples in three datasets when number of sampling steps is decreased to as low as N = 40.}
\label{fig:N_40}
\end{center}
\vspace{-0.5cm}
\end{figure*}

With the Wasserstein gradient estimation method in hand, we now move on to project $\mu_{\tau+\delta \tau}^{pred}$ to $\mu_{\tau+\delta \tau}^{mean}$. We define the functional $\mathcal{J}$ in the following way:
\begin{align}
\nonumber
    &\mathcal{J}_{\tau+\delta \tau}((I + T_{\theta, \tau+\delta \tau})_{\#}{ \mu_{\tau+\delta \tau}^{pred}} )\\
    &= \int ||x_{\tau+\delta \tau}^{pred} + T_{\theta}(x_{\tau+\delta \tau}^{pred}) - x_{\tau+\delta \tau}^{mean}||^2 d\mu_{\tau+\delta \tau}^{pred}(x_{\tau+\delta \tau}^{pred})
\end{align}
Note that, $T_{\theta, \tau+\delta \tau}$ is indexed by time. Instead of learning different $T_{\theta}$ for different time, we parameterize it by time as $T(., \tau)$ as in score function \cite{song2021scorebased}. Similarly, we choose $h = 1/\sqrt{\beta_{\tau}}$ to be different for different $\tau$ in eq.(\ref{efficient_Wasserstein_grad}). To train the projection function, $T$, we sample $\tau $ from the uniform distribution in the interval $(0,1]$, $x_{\tau=T} \sim \mu_{\tau=T}$ and optimize the following optimization:
\begin{align}
\nonumber
     \underset {\theta} {min} \hspace{0.2cm} E_{\tau, x_{\tau = T}}& \big [ ||x_{\tau+\delta \tau}^{pred} + T_{\theta}(x_{\tau +\delta \tau}^{pred}, \tau +\delta \tau ) - x_{\tau +\delta \tau}^{mean}||^2  \\
     \nonumber
     & + \frac{\sqrt{\beta_{\tau}}}{2}||T_{\theta}(x_{\tau +\delta \tau}^{pred}, \tau +\delta \tau)||^2 \big ]
\end{align}
After training, we have, 
$\sqrt{\beta_{\tau}}T_{\theta}^*  = - \nabla_{W_2}\mathcal{J}(\mu^{pred})$. Using this relation, we update the sample as 
\begin{align}
    x^{proj} &= x^{pred} - \nabla_{W_2}\mathcal{J}(\mu^{pred})(x^{pred}).{\delta \mu}. \delta \tau\\
    &= x^{pred} +\sqrt{\beta_\tau} T_{\theta}^*(x^{pred}, \tau) .{\delta \mu}. \delta \tau
\end{align}
where $\delta \mu$ is the small scalar by which to move in the direction of Wasserstein gradient and $\delta \tau$ is present due to the fact that the Wasserstein gradient flow with velocity field $v_t$ corresponds to dynamics $\dot{x_{\tau}} = v_{\tau}(x_{\tau})$ (see eq.(\ref{mu_to_x})). See Algorithm \ref{alg:sampling} for full sampling algorithm.

\section{Experimental Results}
To demonstrate efficacy of our algorithm, we train and generate samples on three datasets: 1) Celeb-A dataset, 2) LSUN-church dataset, and 3) SVHN dataset, where all images are of size $64 \times 64$. Our neural network architecture for both score model and projection model uses standard U-net architecture with attention due to \cite{diffusion_beats_gan}. We use publicly available code from \cite{song2021scorebased} as score-based model. In these experiments, we demonstrate that as we decrease the number of sampling steps, the quality of samples decreases in Score-based generative model, but we maintain quality to a reasonable level even when the number of sampling steps is reduced to as low as 20. We use FID metric \cite{fid_gans} to measure the sample qualities. 

In Table \ref{fid_is_score}, we compare the FID score of generated images from score-based method and our Predict-Project method. We outperform the score-based method by a large margin in all cases except SVHN (N=100). This underperformance could be because our model is not trained well in SVHN (see appendix) due to lack of time. For qualitative comparison, please see Fig. (\ref{fig:N_40}) and Fig. (\ref{fig:celeba}). These results our claim that projecting to the gradient-flow-path improves sample quality, especially when the number of sampling-step is low. 

\section{Conclusion}

We presented a novel geometric perspective on score-based generative models (also called diffusion generative models) by showing that they are in fact gradient flows in a space of probability measures. The geometric insight gained from this connection helped us answer and clarify some critical open questions. We also demonstrated that it can help us design faster sampling algorithm. We believe that this connection will help diffuse knowledge between Wasserstein gradient flow field and score-based generative models field in the future inspiring interesting solutions to problems in both areas. Similarly, connection with energy-based models, proximal algorithms and reverse SDE could help design better algorithms in general and generative models in specific. Energy-based models, for example, can be combined with score-based models in the light of geometric understanding.

\newpage

\newpage

\bibliography{mainbib}
\bibliographystyle{icml2021}

\appendix
\newpage
\onecolumn
\section{Proof of Theorems}
\subsection{Forward Diffusion as Gradient Flow}
\begin{theorem}
Consider an accelerated gradient flow in eq.(\ref{accelerated_gf}) with initial measure $\mu_0 = \rho$ and the target measure $\mu_T = \pi$ and the functional on the Wasserstein space defined by $\mathcal{F}(\mu) = \text{KL}(.||\pi)$. The family of measures corresponding to this gradient flow is equivalent to the family of measures corresponding to the forward Fokker Plank equation in eq.(\ref{fokker_planck}) given that $f$ and $\beta_t$ take the following form:  $f = {\beta_t} \nabla \log \pi$, $\beta_t = \frac{g_t^2}{2}$.
\end{theorem}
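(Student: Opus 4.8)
The plan is to verify that, under the substitutions $f = \beta_t \nabla \log \pi$ and $\beta_t = g_t^2/2$, the accelerated Wasserstein gradient flow equation and the forward Fokker--Planck equation are literally the same PDE, and then to conclude equality of the two families of measures by uniqueness of the solution with the common initial datum $\mu_0 = \rho$. First I would expand the right-hand side of eq.(\ref{accelerated_gf}) for $\mathcal{F}(\mu) = \text{KL}(\cdot\|\pi)$. Using the Wasserstein gradient of the KL functional recalled in the Preliminaries, $\nabla_{W_2}\text{KL}(\cdot\|\pi) = \nabla\log(\mu_t/\pi) = \nabla\log\mu_t - \nabla\log\pi$, so that
\begin{align}
\frac{\partial \mu_t}{\partial t}
&= div\big[\mu_t\,\beta_t\,(\nabla\log\mu_t - \nabla\log\pi)\big] \\
&= div\big[\beta_t\,\mu_t\nabla\log\mu_t\big] - div\big[\beta_t\,\mu_t\nabla\log\pi\big].
\end{align}
The elementary identity $\mu_t\nabla\log\mu_t = \nabla\mu_t$ then rewrites the first term as the linear diffusion term $div[\beta_t\nabla\mu_t]$.

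Next I would expand eq.(\ref{fokker_planck}). Since $g_t$ depends on $t$ only, $div(\nabla(g_t^2\mu_t)) = g_t^2\,div(\nabla\mu_t)$, hence
\begin{align}
\frac{\partial \mu_t}{\partial t} = -div(\mu_t f) + \tfrac{1}{2}g_t^2\,div(\nabla\mu_t).
\end{align}
Substituting $f = \beta_t\nabla\log\pi$ and $g_t^2 = 2\beta_t$ gives $-div(\beta_t\mu_t\nabla\log\pi) + div(\beta_t\nabla\mu_t)$, which is exactly the expression obtained above for the accelerated gradient flow. So both families solve the same evolution equation, and both carry the initial condition $\mu_0 = \rho$.

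To finish I would invoke well-posedness of this linear, time-inhomogeneous Fokker--Planck equation: with drift $\beta_t\nabla\log\pi$ and strictly positive (time-dependent) diffusion coefficient $\beta_t$, the weak solution in $\mathcal{P}_2(\mathcal{X})$ with datum $\rho$ is unique, so the accelerated-gradient-flow family and the forward-diffusion family coincide for every $t \in (0,T)$. The terminal identification $\mu_T = \pi$ is consistent because $\pi$ is stationary for this operator --- the drift being $\nabla\log\pi$ makes the right-hand side vanish at $\mu_t = \pi$ --- so the flow relaxes to $\pi$. The main obstacle is not the algebra, which is the handful of lines above, but making the identification rigorous in the distributional sense of eq.(\ref{continuity_eq}): one must check that the velocity field $v_t = -\beta_t\nabla\log(\mu_t/\pi)$ lies in $L^2(\mu_t)$ along the flow so that the continuity equation genuinely holds, and cite the appropriate uniqueness result (e.g.\ Ambrosio--Gigli--Savar\'e) under mild regularity and growth hypotheses on $\rho$ and $\pi$; the remark following the theorem, where $\pi = \mathcal{N}(0,I)$ yields $f = -\beta_t x$ and recovers the DDPM SDE, is then an immediate special case.
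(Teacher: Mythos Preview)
Your proposal is correct and follows essentially the same approach as the paper: both arguments reduce to the algebraic verification that, after substituting $f=\beta_t\nabla\log\pi$ and $\beta_t=g_t^2/2$, the Fokker--Planck equation and the accelerated gradient-flow continuity equation are identical PDEs (the paper writes both in the form $-div(\mu_t(\cdot))$ and matches the velocity fields, while you expand each side and match the diffusion and drift terms separately, using the same identity $\mu_t\nabla\log\mu_t=\nabla\mu_t$). Your additional remarks on well-posedness and uniqueness of the solution with initial datum $\rho$ go beyond what the paper records --- the paper simply compares the two equations and stops --- so that part is extra care rather than a different route.
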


\begin{proof}
The Fokker Planck equation in eq.(\ref{fokker_planck}) is 
\begin{align}
    \frac{\partial \mu_t}{\partial t} = -div (\mu_t f) + \frac{1}{2}div(\nabla(g_t^2 \mu_t))
\end{align}
We can rewrite this equation as:
\begin{align}
    \frac{\partial \mu_t}{\partial t} &= -div (\mu_t f) + div(\mu_t\frac{1}{2}g_t^2\nabla( \log \mu_t))\\
    \label{fokker_planck_app}
    &= -div \big(\mu_t (f - \frac{1}{2}g_t^2\nabla \log \mu_t )\big)
\end{align}
On the other hand, accelerated Wasserstein gradient flow is given by eq.(\ref{accelerated_gf}): 
\begin{align}
    \frac{\partial \mu_t}{\partial t} &= div \big[\mu_t .\beta_t \nabla \log (\frac{ \mu_t}{\pi})]\\
    \label{accelerated_gf_app}
    & = -div(\mu_t.(\beta_t \nabla \log \pi - \beta_t \nabla \log \mu_t))
\end{align}

Comparing eq.(\ref{fokker_planck_app}) and eq.(\ref{accelerated_gf_app}), it is clear that accelerated Wasserstein gradient Flow is same as the forward stochastic differential equation if we choose:
\begin{align}
    \beta_t &= \frac{1}{2}g_t^2, \text{and}\\
    f &= \beta_t \nabla \log \pi
\end{align}
\end{proof}

\begin{remark}
Consider the special case with measure $\mu_T = \pi =\mathcal{N}(0,I) = {\exp(-\frac{||x||^2}{2})}/{Z}$, we get $f = -\beta x$ and the forward diffusion equation is given by the following SDE:
\begin{align}
    dx = -{\beta_t}x dt +\sqrt{2\beta_t}dw
\end{align}
which is exactly the forward flow of DDPM model \cite{Ho_ddpm, song2021scorebased}.
\end{remark}

\subsection{Generation as Reverse Gradient Flow}
\begin{theorem}
The reverse SDE in eq.(\ref{rev_sde_tau}) is equivalent to the Wasserstein gradient flow in the space of probability measures with respect to the functional $\mathcal{F}(\mu ) = - \text{KL}(.||\pi)$ starting from the initial measure $\mu_{\tau = 0} = \pi$ towards the target measure $\mu_{\tau = T} = \rho$.
\end{theorem}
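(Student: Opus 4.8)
The plan is to follow a forward--backward (proximal--gradient) splitting of the Wasserstein gradient flow for the functional $\mathcal{F}(\mu) = -\mathrm{KL}(\mu||\pi)$, in the spirit of \cite{wibisono2018sampling, salim2020wasserstein}. First I would write $\mathcal{F}(\mu) = -\int \log\mu\, d\mu + \int \log\pi\, d\mu$ and note that, because $-\mathrm{KL}(\cdot||\pi)$ is \emph{concave} on $\mathcal{P}_2(\mathcal{X})$, one cannot directly discretize the raw flow $\partial_\tau\mu_\tau = -div(\mu_\tau\nabla_{W_2}\mathcal{F})$; instead I would add and subtract the negative-entropy functional $\mathcal{H}(\mu) = \int\log\mu\, d\mu$, splitting $\mathcal{F} = \mathcal{G} + \mathcal{H}$ with $\mathcal{G}(\mu) = \int(-2\log\mu + \log\pi)\, d\mu$. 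The reason for isolating $\mathcal{H}$ this way is that it is precisely the piece whose JKO (minimizing-movement) step has a known closed form.

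Next I would perform one splitting step at time $\tau$ with step size $\beta_\tau\,\delta\tau$: an explicit Wasserstein-gradient step for $\mathcal{G}$ followed by an implicit JKO step for $\mathcal{H}$, i.e. $\nu_\tau = (I - \beta_\tau\nabla_{W_2}\mathcal{G}(\mu_\tau))_{\#}\mu_\tau$ and then $\mu_{\tau+\delta\tau} = JKO_{\beta_\tau\mathcal{H}}(\nu_\tau)$. Using the first-variation formula $\nabla_{W_2}\mathcal{G}(\mu) = \nabla\mathcal{G}'(\mu) = \nabla(-2\log\mu + \log\pi)$ and transporting the measure-level dynamics to particles via $\dot{x}_\tau = v_\tau(x_\tau)$ (eq.(\ref{mu_to_x})), the explicit step reads $y_\tau = x_\tau - \beta_\tau\nabla(-2\log\mu_\tau(x_\tau) + \log\pi(x_\tau))\,\delta\tau$. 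For the implicit step I would invoke the classical fact that the JKO scheme for the negative entropy generates the heat equation, so that $JKO_{\beta_\tau\mathcal{H}}$ is realized on samples by the Gaussian increment $x_{\tau+\delta\tau} = y_\tau + \sqrt{2\beta_\tau}\,z_\tau$ with $z_\tau\sim\mathcal{N}(0,I)$.

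Composing the two sub-steps gives the one-step recursion $x_{\tau+\delta\tau} = x_\tau + (2\beta_\tau\nabla\log\mu_\tau(x_\tau) - \beta_\tau\nabla\log\pi(x_\tau))\,\delta\tau + \sqrt{2\beta_\tau}\,z_\tau$; letting $\delta\tau\to 0$ produces the SDE $dx = (2\beta_\tau\nabla\log\mu_\tau(x) - \beta_\tau\nabla\log\pi(x))\,d\tau + \sqrt{2\beta_\tau}\,dw$. I would then identify this with the reverse SDE eq.(\ref{rev_sde_tau}) by taking $g_\tau^2 = 2\beta_\tau$ and $f = \beta_\tau\nabla\log\pi$ — the very same relations that Theorem \ref{forward_thm} produced for the matching forward flow — and record that the boundary data $\mu_{\tau=0} = \pi$ and $\mu_{\tau=T} = \rho$ are inherited from the time reversal $\tau = T-t$ of that forward flow (where $\mu_{t=0} = \rho$, $\mu_{t=T} = \pi$).

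The main obstacle I anticipate is making the implicit half-step rigorous and controlling the splitting error: one must justify both that the JKO step for $\mathcal{H}$ at scale $\beta_\tau\delta\tau$ agrees to leading order with convolution against $\mathcal{N}(0,2\beta_\tau\delta\tau\,I)$, and that the composed explicit/implicit update converges as $\delta\tau\to 0$ to the flow of $\partial_\tau\mu_\tau = -div(\mu_\tau\nabla_{W_2}\mathcal{F})$. Since $\mathcal{F}$ is concave, the usual convexity/EVI contraction estimates of the sampling literature do not apply; my plan is instead to observe that the target family $(\mu_\tau)_{\tau\in[0,T]}$ is already well-defined — it is the time reversal $\mu_\tau = \mu_{t=T-\tau}$ of the forward accelerated flow of Theorem \ref{forward_thm} — and then to verify, via the Fokker--Planck/JKO correspondence of \cite{jko} together with the forward-backward analyses of \cite{wibisono2018sampling, salim2020wasserstein}, that the discretization traces exactly this family in the limit. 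A secondary, purely computational subtlety is the factor of $2$: I must check that the first variation of the internal-energy term $-2\int\mu\log\mu$ inside $\mathcal{G}$ supplies exactly the extra $\nabla\log\mu_\tau$ needed so that the drift equals $2\beta_\tau\nabla\log\mu_\tau - \beta_\tau\nabla\log\pi$, matching $-f + g_\tau^2\nabla\log\mu_\tau$ with $g_\tau^2 = 2\beta_\tau$.
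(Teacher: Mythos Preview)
Your proposal is correct and follows essentially the same approach as the paper: the identical add-and-subtract splitting $\mathcal{F}=\mathcal{G}+\mathcal{H}$ with $\mathcal{G}(\mu)=\int(-2\log\mu+\log\pi)\,d\mu$ and $\mathcal{H}(\mu)=\int\log\mu\,d\mu$, the same explicit Wasserstein-gradient step for $\mathcal{G}$ followed by the JKO/Brownian step for $\mathcal{H}$, and the same identification $g_\tau^2=2\beta_\tau$, $f=\beta_\tau\nabla\log\pi$. Your additional remarks on the rigor of the splitting limit and the concavity obstacle go beyond what the paper addresses, but the core argument matches exactly.
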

\begin{proof}
\begin{align}
    \mathcal{F}(\mu) &= -KL(\mu||\pi)\\
    & = \int -\log \mu d\mu + \int \log \pi d\mu \\
    \label{add_subtract_app}
    & = \color{red}{\underbrace{\color{black}{\int (-2 \log \mu  + \log \pi)     
    d\mu}}_{{\mathcal{G}}}} \color{black}{+} \color{blue} {\underbrace{\color{black}{ \int \log \mu d\mu}}_{\mathcal{H}}}
\end{align}
Here, we apply forward backward splitting scheme due to \cite{wibisono2018sampling, salim2020wasserstein}
\begin{align}
\label{descend_app}
 \nu_{\tau} &= (I - \beta_{\tau} \nabla_{{W}_2}\mathcal{G}(\mu_{\tau}))_{\#} \mu_{\tau}\\
 \label{jko_step_app}
\mu_{\tau+ \delta \tau} &= JKO_{\beta_{\tau} \mathcal{H}}(\nu_{\tau})
\end{align}
where $\nabla_{{W}_2}\mathcal{G}(\mu)$ is the Wasserstein gradient. We can compute the expression for the Wasserstein gradient using the following relation:
\begin{align}
    \nabla_{W_2} \mathcal{G} = \nabla \mathcal{G'}(\mu)
\end{align}
First, the derivative is given by 
\begin{align}
    \mathcal{G}'(\mu) = -2 \log \mu + \log \pi
\end{align}
Therefore,

\begin{align}
    \nabla_{{W}_2}\mathcal{G}(\mu) = \nabla \mathcal{G}'(\mu) = \nabla (-2 \log \mu + \log \pi)
\end{align}
In eq.(\ref{descend_app}), we are trying to move in the direction of Wasserstein gradient. Let samples $x_{\tau}$ from distribution $x_{\tau} \sim \mu_{\tau}$. Transforming differential equation in measure space to sample space, similar to eq.(\ref{mu_to_x}), yields:
 \begin{align}
    &y_{\tau} = x_{\tau} -\beta_{\tau} \nabla(-2 \log \mu_t(x_{\tau}) + \log \pi(x_{\tau}) ) \delta \tau
\end{align}

In eq.(\ref{jko_step_app}), we are using JKO operator as a solution of the negative entropy functional, $\mathcal{H}$, where the JKO operator is defined as : $$JKO_{\beta, \mathcal{H}}(\nu) =  \underset{\zeta \in \mathcal{P}_2(\mathcal{X})}{\text{argmin}} \mathcal{H}(\zeta) + \frac{1}{2 \beta} W_2^2(\zeta, \nu)$$ 

For the negative entropy functional, we have the exact solution as the Brownian motion \cite{jko, wibisono2018sampling, salim2020wasserstein}. Let $y_{\tau} \sim \nu_{\tau}$, we obtain
\begin{align}
    &y_{\tau} = x_{\tau} -\beta_{\tau} \nabla(-2 \log \mu_t(x_{\tau}) + \log \pi(x_{\tau})) \delta \tau\\
    &x_{\tau + \delta \tau}  = y_{\tau} + \sqrt{2\beta_{\tau}} z_{\tau}
\end{align}
Combining both, we obtain
\begin{align}
\nonumber
   x_{\tau+\delta \tau} &= x_{\tau} +(2\beta_{\tau} \nabla \log \mu_{\tau}(x_{\tau}) - \beta_{\tau}\nabla \log \pi(x_{\tau}))\delta \tau \\
   &+ \sqrt{2\beta_{\tau}} z_{\tau}
\end{align}
In the limiting case as $\delta \tau \to 0$, we obtain,
\begin{align}
\nonumber
   dx =  (2\beta_{\tau} \nabla \log \mu_{\tau}(x_{\tau}) - \beta_{\tau}\nabla \log \pi(x_{\tau}))d \tau 
   + \sqrt{2\beta_{\tau}} dw
\end{align}
which coincides exactly with the reverse SDE in eq.(\ref{rev_sde_tau}) for $g_{\tau}^2 = 2\beta_{\tau}$ and $f = \beta_{\tau}\nabla \log \pi$.
\end{proof}

\section{Experimental Details}
We jointly train the score model $s_{\theta}$ and projection model $T_{\theta}$. They have the same U-Net with attention architecture following \cite{diffusion_beats_gan}. We apply minibatch optimization to optimize both score model and projection model. Because of this the computational burden is low. In terms of parameters, since we have additional, projection model, the parameter is twice of regular score model.

We train Celeb-A model upto 450K iteration and LSUN upto 300K iteration with the batch size of $32$, and report the FID score. We had to terminate SVHN early at 50K iteration (batch size = 32) due to lack of time. We will continue to train this model and will update the score later if we get chance.

\subsection{Hyperparameter $\delta \mu$}
While projecting measures to the gradient-flow-path, we scale the Wasserstein gradient towards path by a factor, $\delta \mu$. This factor intuitively represents how much error is likely to be present in the prediction step of the score model. Obviously, the error is large for $N=20$ and small for $N=100$, so we choose $\delta \mu  \in [0,1]$ large for $N=20$ and small for $N=100$. At the moment, it is a hyperparameter, which we optimize keeping in mind that it should correspond to the level of error is score model prediction. In the future work, we will estimate this hyperparameter from the training loss $||T_{\theta}||_{\mu}^2$. Current best hyperparameter for $\delta \mu$ are:

\begin{align*}
&N = 20,  \hspace{0.2cm}\delta \mu = 1\\
&N= 40,  \hspace{0.2cm} \delta \mu = 0.4\\
&N = 100 ,\hspace{0.2cm} \delta \mu = 0.1\\
\end{align*}

We are cleaning up the code and will make it publicly available.

\end{document}